\theoremstyle{plain}
\newtheorem{theorem}{Theorem}[section]
\newtheorem{proposition}[theorem]{Proposition}
\theoremstyle{definition}
\newtheorem{definition}[theorem]{Definition}
\theoremstyle{remark}
\icmltitlerunning{Omni-Granular Ego-Semantic Propagation for Self-Supervised Graph Representation Learning}
\begin{document}

\twocolumn[
\icmltitle{Omni-Granular Ego-Semantic Propagation for Self-Supervised \\ Graph Representation Learning}




\begin{icmlauthorlist}
\icmlauthor{Ling Yang}{yyy,sch}
\icmlauthor{Shenda Hong}{yyy,sch}
\end{icmlauthorlist}

\icmlaffiliation{sch}{Institute of Medical Technology, Health Science Center of Peking University, Beijing, China}
\icmlaffiliation{yyy}{National Institute of Health Data Science, Peking University, Beijing, China}

\icmlcorrespondingauthor{Ling Yang}{yangling0818@163.com}
\icmlcorrespondingauthor{Shenda Hong}{hongshenda@pku.edu.cn}

\icmlkeywords{Machine Learning, ICML}

\vskip 0.3in
]



\printAffiliationsAndNotice{}  

\begin{abstract}
Unsupervised/self-supervised graph representation learning is critical for downstream node- and graph-level classification tasks. 
Global structure of graphs helps discriminating representations and existing methods mainly utilize the global structure by imposing additional supervisions. However, their global semantics are usually invariant for all nodes/graphs and they fail to explicitly embed the global semantics to enrich the representations. In this paper, we propose \textbf{O}mni-Granular \textbf{E}go-Semantic \textbf{P}ropagation for Self-Supervised \textbf{G}raph Representation Learning (\textit{OEPG}). Specifically, we introduce instance-adaptive global-aware \textit{ego-semantic} descriptors, leveraging the first- and second-order feature differences between each node/graph and hierarchical global clusters of the entire graph dataset. The descriptors can be explicitly integrated into local graph convolution as new neighbor nodes. Besides, we design an \textit{omni-granular} normalization on the whole scales and hierarchies of the ego-semantic to assign attentional weight to each descriptor from an omni-granular perspective.  
Specialized pretext tasks and cross-iteration momentum update are further developed for local-global mutual adaptation. In downstream tasks, OEPG consistently achieves the best performance with \textbf{a 2\%$\sim$6\% accuracy gain} on multiple datasets cross scales and domains. Notably, OEPG also generalizes to quantity- and topology-imbalance scenarios.
\end{abstract}

\section{Introduction}
In the past few years, the paradigm of graph learning has been shifted from structural pattern discovery \cite{leskovec2005graphs,borgatti2000models,milo2004superfamilies,newman2006modularity,watts1998collective} to graph representation learning \cite{hamilton2017inductive,velivckovic2017graph}. In many scientific domains and industrial scenarios, labeled graph-structured data is limited and hard to obtain. Thus learning the representations in an unsupervised or self-supervised manner becomes increasingly important. Traditional unsupervised graph representation learning approaches, such as
DeepWalk \cite{perozzi2014deepwalk}, node2vec \cite{grover2016node2vec} and LINE \cite{tang2015line}, utilize a framework originated in the skip-gram model \cite{mikolov2013distributed} which aim to learn an encoding function for transforming nodes to low-dimensional embeddings that preserve vital attributive and structural features. Recent works conduct graph representation learning in different ways including learning transferable prior knowledge with meta-learning \cite{lu2021learning,thakoor2021bootstrapped} and predicting the informative substructures of graph \cite{hu2019strategies,jiao2020sub,zhang2020motif,zhang2021motif}.

Inspired by powerful contrastive learning paradigm \cite{oord2018representation,chen2020simple,he2020momentum}, graph contrastive learning (GCL) methods \cite{zhu2020deep} achieve a significant success. Recent works mainly focus on designing topology \cite{hassani2020contrastive,qiu2020gcc} or node/edge augmentations \cite{you2020graph,zhu2021graph} and defining the contrastive pairs at different levels (e.g., node-level and graph-level) of the graph \cite{peng2020graph,sun2019infograph}. 
They perform local-global or global-global contrasting \cite{zhu2021empirical} to discriminate the representations, proving the effectiveness of global semantics. However, their use of the whole adjacency matrix is impractical when scaling to large graphs.
\begin{figure*}[ht]
\vskip 0.1in
\begin{center}\centerline{\includegraphics[width=12cm]{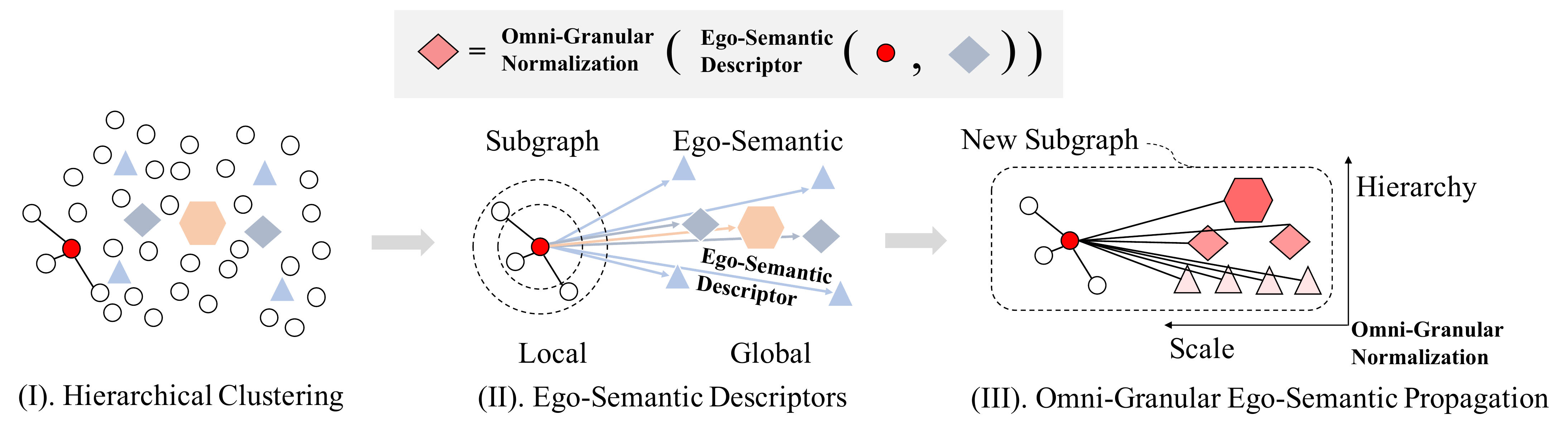}}
\caption{Illustration of OEPG. (I) Obtain hierarchical global clusters. (II) Explicitly define instance-adaptive global-aware ego-semantic descriptors by measuring first- and second-order feature differences between target node (denoted in red) and hierarchical clusters. (III) Perform omni-granular normalization on ego-semantic descriptors and use them to form new subgraph for later feature propagation.}
\label{pic-oepg}
\end{center}
\vskip -0.1in
\end{figure*}
Some recent global-level self-supervised methods \cite{you2020does} focus on utilizing the global semantics to strengthen the graph representation learning, using the global structure information to guide the local pre-training \cite{jin2020self} or impose the additional supervisions to conduct informative learning \cite{xu2021self}. Nevertheless, their utilization of global semantics are implicit since they can not explicitly embed the global structure information into representations. Besides, their global semantics are invariant for all nodes/graphs, which fail to adequately describe the global diversity for better discriminativity and expressiveness.  

In this paper, we propose Omni-Granular Ego-Semantic Propagation for self-supervised graph representation learning (OEPG) to address above problems.
As in Figure.\ref{pic-oepg}-(I), we use k-means to obtain global clusters on hierarchical levels after node- or graph-level pre-training of the graph dataset.
We characterize the instance-adaptive global-aware feature for each node/graph by the \textit{ego-semantic} descriptor, which leverages the first- and second-order feature differences between itself and all global clusters in a 1-vs-N manner as in Figure.\ref{pic-oepg}-(II). 
We do not straightly use global clusters to represent the global semantic since it should be instance-adaptive from the ego perspective of different nodes/graphs.
We explicitly incorporate the ego-semantic descriptor into local graph convolutional process to model the local-global feature propagation. 
Contrary to implicit methods such as predicting global structure related pairwise distances \cite{jin2020self} or imposing additional global structural supervision \cite{xu2021self}, our ego-semantic descriptors can be explicitly utilized to make feature propagation between local context and global semantic. 

We devise an \textit{omni-granular} normalization over the hierarchies and scales of the ego-semantic to make cross-granularity attention between the coarse-grained and fine-grained descriptors. It assigns attentional weight to each descriptor according to the contribution from an omni-granular perspective. After the normalization, the weighted ego-semantic descriptors and original graph (or subgraph centered around node) are used to form a new one as in Figure.\ref{pic-oepg}-(III). To adequately utilize the local and global information of the new graph/subgraph,
we further propose specialized pretext tasks involving local neighbors and global ego-semantic descriptors by performing mutual contrasting and prediction.
Considering the dynamics of feature space in training process, we design a cross-iteration momentum update mechanism for omni-granular ego-semantic descriptors in a moving-average way to achieve local-global mutual adaptation.
It is noteworthy that the ego-semantic and omni-granular normalization of our OEPG are also applicable for downstream tasks since they are explicit feature or operations, and thus narrow the gap between pre-training and applications.
Additionally, OEPG is robust to quantity- and topology-imbalance \cite{chen2021topology} scenarios without additional operations (e.g., re-weighting or over-sampling).

Our main contributions are summarized as the following:
\begin{itemize}
    \item We firstly explicitly characterize the instance-adaptive global-aware feature by \textit{ego-semantic} descriptors, utilizing combined first- and second-order feature differences to discriminate and enrich the representations.
    \item We propose an \textit{omni-granular} normalization over all the hierarchies and scales of ego-semantic for assigning attentional weight to each descriptor according to the contribution from an omni-granular perspective.
    \item Specialized tasks and a cross-iteration omni-granular momentum update are both proposed for achieving local-global mutual adaptation in our framework.
    \item Our OEPG substantially outperforms previous methods with a 2\%$\sim$6\% accuracy gain in multiple downstream tasks on datasets cross scales and domains, and generalizes to quantity- and topology-imbalance scenarios.
\end{itemize}

\section{Related Work}
\paragraph{Self-Supervised Graph Representation Learning.} Existing self-supervised graph representation learning methods can be mainly divided into two categories: contrastive \cite{you2020graph} and predictive \cite{hu2019strategies,wu2021self}. The contrastive methods mainly design different augmented views and build loss function \cite{velivckovic2018deep,sun2019infograph,peng2020graph} based on the pre-defined contrastive pairs to guide the graph representation learning \cite{suresh2021adversarial,xu2021infogcl,zhang2021canonical}. 
The predictive methods focus on the information embedded in the graph, generally based on
pretext tasks such as reconstruction \cite{kipf2016variational} or other predictive tasks, which exploit the attributes and
structures of graph data as additional supervisions \cite{hu2020gpt,rong2020self,xu2021self,thakoor2021bootstrapped}. 
\paragraph{Global-Level Self-Supervised Objectives.} Recent works begin to focus on global-level self-supervised objectives, utilizing the global structure of the whole graph dataset. Some methods use the whole adjacency matrix to do the local-global contrastive learning \cite{zhu2021empirical}, which is impractical in large graphs.  \citet{you2020does} applies graph topology partition and graph completion to improve the self-supervision. \citet{jin2020self} takes a bird’s-eye view of the position of the node in the graph and proposes to pre-train by the guidance of global structure information. \citet{xu2021self} learns the global structure by an online expectation-maximization (EM)
algorithm \cite{dempster1977maximum}. Nevertheless, their approaches to capture the global semantic are all implicit with additional supervisions, and thus they are inapplicable in the inference stage and lead to a gap between pre-training and applications.

\section{OEPG: Omni-Granular Ego-Semantic Propagation Graph}
\label{sec-oepg}
In this section, we elucidate our Omni-Granular Ego-Semantic Propagation Graph (OEPG) in detail.
OEPG is designed to explicitly characterize the instance-adaptive global-aware feature for node/graph representations.
Specifically, we derive first- and second-order \textit{ego-semantic} descriptors for each subgraph in a locally-smoothed feature space, and perform \textit{omni-granular} normalization on the descriptors to make attentional cross-granularity interactions.  
Then we aggregate the original subgraph and weighted ego-semantic (combine first and second orders) to form a new subgraph.
Finally, specialized pretext tasks and a cross-iteration momentum update mechanism are devised for local-global feature propagation and mutual adaptation.

\paragraph{A Unified Perspective.} In our framework, we treat the node- and graph-level graph representation learning from an unified perspective. Specifically, considering a set of unlabeled graphs or subgraphs $\mathbf{G}=\{\mathcal{G}_1,\mathcal{G}_2,\cdots,\mathcal{G}_N\}$, where a subgraph $\mathcal{G}=(\mathcal{V},\mathcal{E})$ is centered around node $\mathbf{V}$. 
\textbf{We only illustrate the node-level pre-training for simplicity.}
We aim at learning low-dimensional vectors $\mathbf{H}=\{h_{\mathcal{G}_1},h_{\mathcal{G}_2},\cdots,h_{\mathcal{G}_N}\}$, where $h_{\mathcal{G}_N}\in \mathbb{R}^d$. The embedding process of a $l$-th GNN model is formulated as:  
\begin{align}
\label{eq-pre}
\begin{split}
    \mathbf{V}_l=\mathcal{F}_1(\mathbf{V}_{l-1},&\mathcal{F}_2(\{(\mathbf{V}^{u}_{l-1}, \mathbf{V}_{l-1},\mathcal{E})):u\in \mathcal{N}_{\mathbf{V}}\}),\\
    &h_{\mathcal{G}}=\mathcal{F}_R({\mathbf{V}|\mathbf{V}\in \mathcal{V}}),
\end{split}
\end{align}
where $\mathcal{N}_{\mathbf{V}}$ is the neighbor set of $\mathbf{V}$, $\mathcal{F}_1$ and $\mathcal{F}_2$ denote the message passing and update function \cite{gilmer2017neural} respectively. $\mathcal{F}_R$ is a permutation-invariant readout function \cite{ying2018hierarchical,zhang2018end}.
\subsection{Ego-Semantic Descriptor}
\label{sub3.1}
\paragraph{Motivation.} Graph Neural Networks (GNNs) derive proximity-preserved feature vectors in a neighborhood aggregation way \cite{gilmer2017neural}.
Such a locally-smoothed latent space benefits the information propagation \cite{kipf2016semi,garcia2017learning,wang2020unifying} between the nodes. However, the local proximity based feature representations fail to capture the global pattern or structure of the whole graph, which is crucial for learning discriminative representations, especially in a self-supervised setting. An alternative way is to utilize the whole adjacency matrix \cite{hjelm2018learning,hassani2020contrastive}, but it is impractical for when applied in large graph. \citet{jin2020self} and \citet{xu2021self} firstly use global structure related supervisions for representation learning. We note that their utilization of global information are implicit, and the global structure information can not be explicitly embedded in learned representations. Besides, their global semantics are invariant for all subgraph instances, limiting the discriminativity and expressiveness.
We argue that an effective expression of global semantic should be instance-adaptive and dynamic, which means it changes accordingly with the change of target node/graph and thus helps discriminating  the representations from both local and global perspectives. Hence, we explicitly characterize an instance-adaptive global-aware feature, which integrates the local-global semantic differences.

\paragraph{The First-Order Ego-Semantic Descriptor.} Considering that existing global-level methods only 
use instance-invariant global information and also lead to over-smoothing problem, we propose the first-order ego-semantic descriptor to embed the \textbf{instance-adaptive 1-vs-N global diversity} with differential feature and alleviate the over-smoothing problem, which firstly considers the differences between local subgraph feature and global semantic clusters. 
In specific, we first use typical graph contrastive learning to acquire a locally-smoothed feature space and use K-means to make hierarchical clustering on the whole graph dataset for obtaining non-parametric global clusters $\{\{\mathbf{C}_{s,h}\}_{s=1}^{S_h}\}_{h=1}^{H}$, where $H$ is the total hierarchies and $S_h$ denotes the number of scales in $h$-th hierarchy ($S_1>S_2>\cdots>S_H$). After that, we not only want to associate the local context with the global semantic but also expect that each target node can explicitly embeds the self-adaptive global-aware semantic. To achieve this goal, we define the first-order ego-semantic descriptor.
\begin{definition}
\label{def:1}
For $s$-th cluster in $h$-th hierarchy, where $h \in [1,H]$, $s \in [1,S_h]$, then the first-order ego-semantic descriptor $\mathbf{D}^{1st}_{s,h}\in \mathbb{R}_d$ for target $\mathbf{V}^{target}$ is defined as: 
\begin{align}
\label{ego-semantic1}
\begin{split}
    \mathbf{D}^{1st}_{s,h}=(\mathbf{V}^{target}-\mathbf{C}_{s,h})/||\mathbf{V}^{target}-\mathbf{C}_{s,h}||_2,
\end{split}
\end{align}
where we derive $\mathbf{V}^{target}/||\mathbf{V}^{target}-\mathbf{C}_{s,h}||_2=f^t\in \mathbb{R}^d$, $\mathbf{C}_{s,h}/||\mathbf{V}^{target}-\mathbf{C}_{s,h}||_2=f^c_{s,h}\in \mathbb{R}^d$, and we specify $\mathbf{D}^{1st}_{s,h}$ as ($[d]$ denotes $d$-th dimension):
\begin{align}
    [
    f^{t}[1]-f^{c}_{s,h}[1],\cdots,f^{t}[d]-f^{c}_{s,h}[d]
    ].
\end{align}
\end{definition}
$\{\{\mathbf{D}^{1st}_{s,h}\}_{s=1}^{S_h}\}_{h=1}^{H}$ describe diverse global semantics from the ego perspective, calculated by the channel-wise feature difference between the target node and full global clusters. Thus for target nodes of different classes, their first-order ego-semantic descriptors can be also different, discriminating the final representations. Note that we use $l_2$ normalization along the feature channels to eliminate the influence caused by large numerical differences of different $\mathbf{D}^{1st}_{s,h}$. 

\paragraph{The Second-Order Ego-Semantic Descriptor.} Regarding enhancing the discriminativity and alleviating the over-smoothing, the first-order ego-semantic descriptor only leverages the instance-wise global diversity. Further, we try to model the \textbf{instance-adaptive 1-vs-N global diversity correlations} from an omni-granular perspective, which aims to leverage more structural information compared with first-order ones. Thus we propose the second-order ego-semantic descriptor.
\begin{definition}
\label{def:2}
For $s$-th cluster in $h$-th hierarchy, where $h \in [1,H]$, $s \in [1,S_h]$, the second-order ego-semantic descriptor $\mathbf{D}^{2nd}_{s,h}\in \mathbb{R}^{(S_1+S_2+\cdots+S_H)}$ for $\mathbf{V}^{target}$ is:
\begin{align}
\label{ego-semantic2}
\begin{split}
    \mathbf{X} = [\mathbf{D}^{1st}_{1,1}\bullet\mathbf{D}^{1st}_{s,h},&\cdots,\mathbf{D}^{1st}_{S_H,H}\bullet\mathbf{D}^{1st}_{s,h}],\\
    \mathbf{D}^{2nd}_{s,h} =&\  \mathbf{X}/||\mathbf{X}||_2,
\end{split}
\end{align}
where $\bullet$ is the inner product. $\mathbf{X}$ is further specified as:
\begin{align}
\label{eq-1vn}
\begin{split}
    \big[\sum^{d}_{i=1}(f^{t}[i]-f^{c}_{1,1}[i])(f^{t}[i]-f^{c}_{s,h}[i]),\cdots,\\\sum^{d}_{i=1}(f^{t}[i]-f^{c}_{S_H,H}[i])(f^{t}[i]-f^{c}_{s,h}[i])\big]
\end{split}
\end{align}
\end{definition}
$\{\{\mathbf{D}^{2nd}_{s,h}\}_{s=1}^{S_h}\}_{h=1}^{H}$ provide target node with a set of global similarity-distribution based features, which model the dense 1-vs-N correlations between each first-order descriptor $\mathbf{D}^{1st}_{s,h}$ and $\{\{\mathbf{D}^{1st}_{s,h}\}_{s=1}^{S_h}\}_{h=1}^{H}$ from the ego perspective. Such second-order descriptors integrate more contextual information to enrich the representations. 
Both first- and second-order descriptors discriminate the node- and graph-level representations, further proved in Appendix.\ref{app-prove}. We combine the both to generate the final ego-semantic descriptors set $\{\{\mathbf{D}_{s,h}\}_{s=1}^{S_h}\}_{h=1}^{H}$ in Subsection.\ref{sec-omni}.

To the best of our knowledge, we are the first to explicitly characterize the instance-adaptive global-aware semantic features for GNNs. Notably, they can be easily utilized in existing GNN architectures. 
A typical local graph convolution in Eq.\ref{eq-pre} can be rewritten as ($\mathcal{E}$ is omitted for simplicity):
\begin{align}
\label{ego-semantic}
    \mathbf{V}^{target}_{l}=\mathcal{F}(\underbrace{\mathbf{V}^{target}_{l-1}, \{\mathbf{V}^{local}_{i,l-1}\}_{i=1}^{N_n}}_{local\ context}),
\end{align}
where $\{\mathbf{V}^{local}_{i,l-1}\}_{i=1}^{N_n}$ denotes the set of total $N_n$ neighbor nodes at the $l$-th layer in subgraph, $\mathcal{F}$ denotes the aggregation and update functions. 
We note that previous GNN methods can only use the local context to aggregate feature and propagate the information for target node.
In contrast, our final ego-semantic descriptors can be explicitly integrated into the local graph convolutional process as follow:
\begin{align}
\label{ego-local-global}
    \mathbf{V}^{target}_{l}=\mathcal{F}(\underbrace{\mathbf{V}^{target}_{l-1}, \{\mathbf{V}^{local}_{i,l-1}\}_{i=1}^{N_n}}_{local\ context},\underbrace{\{\{\mathbf{D}_{s,h}\}_{s=1}^{S_h}\}_{h=1}^{H}}_{global\ ego-semantic}).
\end{align}
Thus the target node explicitly embeds both local context and node-adaptive global semantics, improving the discriminativity and expressiveness from different perspectives.
\subsection{Omni-Granular Normalization}
\label{sec-omni}
The ego-semantic descriptors of each hierarchy
jointly compose the global semantic space and each descriptor represents a semantic patch. The semantic patch changes from the fine granularity to the coarse one with the hierarchy changing from the bottom to the top. 
The ego-semantics of multiple granularities are critical for many downstream tasks such as community detection and molecular property prediction.
Besides, different target node may have the different priorities of the ego-semantic granularities. Thus we perform an \textit{omni-granular} normalization on the whole scales and hierarchies of the ego-semantic to conduct cross-granularity attention between fine-grained and coarse-grained descriptors, and weight them according to their contributions. We normalize first- and second-order descriptors separately since they describe different characteristics of the ego-semantic.
For the first-order ego-semantic descriptors, we have:
\begin{align}
\label{omni-1}
\begin{split}
    a^{1st}_{s,h}
    =&\frac{e^{-\alpha||\mathbf{D}^{1st}_{s,h}||^2}}{\sum\limits^{H}_{m=1}\sum\limits^{S_m}_{k=1}e^{-\alpha||\mathbf{D}^{1st}_{k,m}||^2}},
    \\
\end{split}
\end{align}
where $\mathbf{D}^{1st}_{s,h}$ denotes the initial first-order ego-semantic descriptor. For each target node, we flatten the hierarchies and scales of the ego-semantic descriptors and calculate the attention weight $a_{s,h}$ (ranges between 0 and 1) with the omni-granular normalization technique. $\alpha$ is
a trainable parameter (positive constant) that controls the decay of the response with the magnitude of the feature difference. 
We weight the each initial first-order ego-semantic descriptor:
\begin{align}
\label{omni-4}
\begin{split}
    \mathbf{\Bar{D}}^{1st}_{s,h}=&a^{1st}_{s,h}\cdot\mathbf{D}^{1st}_{s,h}=\frac{e^{-\alpha||\mathbf{D}^{1st}_{s,h}||^2}}{\sum\limits^{H}_{m=1}\sum\limits^{S_m}_{k=1}e^{-\alpha||\mathbf{D}^{1st}_{k,m}||^2}}\cdot\mathbf{D}^{1st}_{s,h},
\end{split}
\end{align}
and we weight the each second-order ego-semantic descriptor in a similar way ($\beta$ is another trainable parameter as $\alpha$):
\begin{equation}
\begin{gathered}
\label{omni-5}
    \mathbf{\Bar{D}}^{2nd}_{s,h}
    =\frac{e^{-\beta||\mathbf{D}^{2nd}_{s,h}||^2}}{\sum\limits^{H}_{m=1}\sum\limits^{S_m}_{k=1}e^{-\beta||\mathbf{D}^{2nd}_{k,m}||^2}}\cdot\mathbf{D}^{2nd}_{s,h}.
\end{gathered}
\end{equation}
\begin{proposition}
\label{proposition}
The value of the trainable parameters $\alpha \in (0,\infty)$ and $\beta\in (0,\infty)$ could reflect the global semantical diversity from the ego perspective.
\end{proposition}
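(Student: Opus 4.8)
The plan is to make the informal phrase ``global semantical diversity from the ego perspective'' precise by identifying it with the statistical spread of the descriptor magnitudes entering the omni-granular normalization, and then to show that $\alpha$ (and identically $\beta$) governs a monotone one-parameter family of attention laws interpolating between maximal and minimal such spread. First I would fix a target node and reindex the $M=\sum_{h=1}^{H}S_h$ pairs $(s,h)$ by a single index $j$, writing $E_j$ for the squared magnitude appearing in Eq.~\eqref{omni-1} and $Z(\alpha):=\sum_{j=1}^{M} e^{-\alpha E_j}$, so that the omni-granular weights are exactly the Gibbs law $a_j(\alpha)=e^{-\alpha E_j}/Z(\alpha)$. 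I then propose to quantify ego-perspective diversity by the Shannon entropy $\mathcal{H}(\alpha)=-\sum_j a_j\log a_j$: a high entropy means the attention is spread evenly over many global semantic patches (high diversity), while a low entropy means it collapses onto the single nearest cluster (low diversity).

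The core computation uses the standard Gibbs identities $\langle E\rangle_a=-\partial_\alpha\log Z$ and $\mathrm{Var}_a(E)=\partial_\alpha^2\log Z$. Substituting $\log a_j=-\alpha E_j-\log Z$ gives $\mathcal{H}(\alpha)=\alpha\langle E\rangle_a+\log Z$, and differentiating yields the clean relation
\begin{equation}
\partial_\alpha \mathcal{H}(\alpha)=-\alpha\,\mathrm{Var}_a(E)\le 0,
\end{equation}
so $\mathcal{H}$ is non-increasing on $(0,\infty)$. I would then read off the two boundary regimes: as $\alpha\to 0^+$ the law tends to the uniform distribution $a_j\to 1/M$ with maximal entropy $\log M$ (every cluster contributes equally, hence maximal diversity), and as $\alpha\to\infty$ the mass concentrates on the minimal-energy index, driving $\mathcal{H}\to 0$ (the global semantic is dominated by the closest cluster, hence minimal diversity).

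Hence the trainable $\alpha$ continuously tunes the realized diversity between these extremes, which is the content of the proposition; moreover the displayed identity shows the sensitivity of the attention to $\alpha$ is exactly proportional to the variance, i.e.\ the spread, of the descriptor magnitudes, so the learned $\alpha$ is a genuine readout of the ego-perspective diversity rather than an arbitrary temperature. Repeating the argument verbatim with the second-order magnitudes $E_j=\|\mathbf{D}^{2nd}_j\|^2$ in place of the first-order ones settles the corresponding claim for $\beta$.

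The main obstacle is conceptual rather than computational. First, the descriptors as literally written in Definitions~\ref{def:1} and~\ref{def:2} are $l_2$-normalized, so $\|\mathbf{D}^{1st}_{s,h}\|^2\equiv 1$ and the weights of Eq.~\eqref{omni-1} would be uniform regardless of $\alpha$; for the proposition to be non-vacuous one must read the magnitude entering the normalization as the un-normalized feature-difference norm $\|\mathbf{V}^{target}-\mathbf{C}_{s,h}\|$, which genuinely varies from cluster to cluster, and I would need to state this reading explicitly. Second, one must commit to a defensible proxy for ``semantical diversity''; I take entropy (equivalently, the variance of those magnitudes), and the delicate step is arguing this is the right notion while handling the degenerate equidistant configuration in which $\mathrm{Var}_a(E)\equiv 0$ and $\alpha$ necessarily carries no information, a measure-zero case the proposition must implicitly exclude.
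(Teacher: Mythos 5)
Your proof is correct, and its two boundary regimes \emph{are} the paper's entire proof: the paper argues only that as $\alpha\to\infty$ the weight $a^{1st}_{s,h}$ tends to $1$ on the closest cluster and to $0$ elsewhere, and that as $\alpha\to 0$ every weight tends to $\frac{1}{S_1+S_2+\cdots+S_H}$, stated informally and closed with ``it is the same for $\beta$.'' Everything else you do is a genuine strengthening rather than a detour. By fixing entropy of the Gibbs law $a_j(\alpha)=e^{-\alpha E_j}/Z(\alpha)$ as the diversity proxy and proving $\partial_\alpha\mathcal{H}(\alpha)=-\alpha\,\mathrm{Var}_a(E)\le 0$, you obtain monotone interpolation over all of $(0,\infty)$, not just the two endpoints; that is what actually licenses reading a \emph{trained, finite} value of $\alpha$ as a measurement of diversity, which the paper's two-limit argument cannot do, and the identity moreover ties the sensitivity of the attention to the spread of the energies, making ``reflects diversity'' quantitative. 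Your flagged obstacle is also a real inconsistency in the paper, not an artifact of your reading: Definitions~\ref{def:1} and~\ref{def:2} $l_2$-normalize the descriptors, so $||\mathbf{D}^{1st}_{s,h}||^2_2=||\mathbf{D}^{2nd}_{s,h}||^2_2=1$ for every $(s,h)$, Eq.~\eqref{omni-1} then yields uniform weights for \emph{every} $\alpha$, and both limits asserted in the paper's proof are false as literally written; the proposition is coherent only under the amendment you state explicitly, namely that the energies entering the normalization are un-normalized magnitudes such as $||\mathbf{V}^{target}-\mathbf{C}_{s,h}||^2_2$, which the paper never acknowledges. The only loose ends on your side are the ones you already name --- ties in the minimal energy, where $\mathcal{H}\to\log k$ with $k$ the number of minimizers rather than $0$, and the equidistant configuration with $\mathrm{Var}_a(E)=0$ --- both degenerate cases that do not affect the proposition as stated.
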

\begin{proof} 
If $\alpha$ $\rightarrow \infty$, the attention weight for the ego-semantic descriptor $\mathbf{D}^{1st}_{s,h}$ will be $a^{1st}_{s,h} \rightarrow 1$ and it will be $0$ for other ego-semantic descriptors, which means the target node is only related to the closest cluster $\mathbf{C}_{s,h}$ because of the global diversity of the data distribution. Conversely, If $\alpha$ $\rightarrow 0$, $a^{1st}_{s,h} \rightarrow \frac{1}{S_1+S_2+\cdots+S_H}$, which means all clusters have the similar contribution to the target node because of the global proximity of the data distribution. It is the same for $\beta$.
\end{proof}
We concatenate each weighted first- and second-order ego-semantic descriptor along the channel for better description:
\begin{align}
\label{omni-6}
    \mathbf{D}_{s,h} = \text{LeakyReLU}(\mathbf{W}\cdot\text{Concat}(\mathbf{\Bar{D}}^{1st}_{s,h},\mathbf{\Bar{D}}^{2nd}_{s,h})),
\end{align}
where a linear $\mathbf{W}:\mathbb{R}^{(d+S_1+S_2+\cdots+S_H)}\rightarrow\mathbb{R}^d$ and non-linear function follow the concatenation to fuse descriptors. Through above processes, each target captures the sufficient characteristics about diverse global information, which are beneficial for downstream classification tasks.

Finally, we group the local neighbors and projected ego-semantic descriptors $\mathbf{D}_{s,h}$ to construct a new subgraph for the target node. The adjacency matrix $A_{ori}$ of original subgraph is extended to the new adjacency matrix $A$:
\begin{align}
\label{eq-adj}
\begin{small}
\begin{pmat}[{|}]
A[local,local]=A_{ori} & A[target,global]=1\cr &A[neighbors,global]=0\cr\-
A[global,target]=1&\cr A[global,neighbors]=0&A[global,global]=0\cr
\end{pmat}
\end{small}
\end{align}
where $local$ are the indexes of $target$ and $neighbor$ nodes, and $global$ are the indexes of ego-semantic descriptors. The edges between the target and all ego-semantic descriptors are $1$ and the edges are set to $0$ between local neighbors and the descriptors.
For the graph-level representation learning, we treat the readout feature as the $\mathbf{V}^{target}$ and the $A$ is:
\begin{align}
\label{eq-adj}
\begin{small}
\begin{pmat}[{|}]
A[local,local]=A_{ori} & A[local,global]=1\cr \-
A[global,local]=1&A[global,global]=0\cr
\end{pmat}
\end{small}
\end{align}
The new subgraph contains both local and global information for later self-supervised graph representation learning.
The graph convolutional process would propagate the global-aware omni-granular ego-semantic information to the local context for discriminating the feature representations.   


\subsection{Model Optimization and Downstream Application}
\label{sub3.3}
\paragraph{Specialized Local-Global Pretext Tasks.} For better simultaneously leveraging both local and global information for embedding new subgraph, specialized pretext tasks are devised for pre-training. They can be grouped into two categories, predictive and contrastive pretext tasks. Node-based and edge-based contrastive modes have been fully explored to maximize the agreement of two graph augmentations in graph contrastive learning \cite{zhu2021empirical}. 
We extend them to make it controllable for our OEPG. In specific, we simultaneously drop random portion of the local neighbors (edges) and diverse global-aware ego-semantic descriptors (edges) to produce augmented subgraphs for local-global contrastive learning. Given the local-global masking $T_i,T_j$, the loss is: 
\begin{equation}
\label{eq-contra}
\begin{split}
    \mathcal{L}_{contrastive} =& -\mathbb{E}_{\mathcal{G}\sim p(\mathcal{G})}[sim(\mathcal{F}(T_i(\mathcal{G})),\mathcal{F}(T_j(\mathcal{G})))] + \\ &\mathbb{E}_{\mathcal{G}_{-}\sim p(\mathcal{G}_{-})}[sim(\mathcal{F}(T_i(\mathcal{G})),\mathcal{F}(T_j(\mathcal{G_{-}})))],
\end{split}
\end{equation}
where $sim(\cdot)$ is the similarity function and $\mathcal{G_{-}}$ is negative subgraph. It enables the model robust to both local and global semantic noises and effectively combines the local proximity and global diversity for discriminating the feature.

Context prediction task has been well applied in self-supervised graph representation learning \cite{hu2019strategies}. It enables the model to capture the local context by predicting surrounding graph structures. Inspired by this, we propose a cross-reconstruction task where we mask a substructure containing both local nodes and global ego-semantic descriptors in the subgraph, and use the rest subgraph $\mathcal{G}_{1}$ to reconstruct the masked substructure $\mathcal{G}_{2}$. The loss is: 
\begin{align}
\label{eq-predic}
\begin{split}
    \mathcal{L}_{predictive} =& -\mathbb{E}_{\mathcal{G}\sim p(\mathcal{G})}\ log(\sigma(\mathcal{F}(\mathcal{G}_{1})\cdot\mathcal{F}_{aux}(\mathcal{G}_{2}))),
\end{split}
\end{align}
where $\mathcal{G}_{1}\cup \mathcal{G}_{2}=\mathcal{G}$, $\mathcal{F}_{aux}$ is an auxiliary encoder to produce substructure embedding and $\sigma(\cdot)$ is the sigmoid function. It builds a strong correlation between local and global semantics by predicting the co-concurrences of them.

\paragraph{Cross-Iteration Omni-Granular Momentum Update.} In the pre-training procedure guided by specialized contrastive and predictive pretext tasks, 
Considering the shifts of global feature space in pre-training process, we introduce a cross-iteration momentum update mechanism of multi-granular clusters $\{\{\mathbf{C}_{s,h}\}_{s=1}^{S_h}\}_{h=1}^{H}$. 
Specifically, for each node, we assign it to a closest cluster of each hierarchy. Given the hierarchy index $h\in [1,H]$, then the scale index $s$ of each hierarchy is calculated by:
\begin{align}
\label{argmax}
\begin{split}
  s = \mathop{\text{argmax}}\limits_{k}(sim(\mathbf{V},\{\mathbf{C}_{k,h}\}_{k=1}^{S_h})).
\end{split}
\end{align}
Instead of updating node queue after each training iteration, we construct a node queue $\{\mathbf{Q}_{s,h,i}\}_{i=1}^{i\in [1,\mathcal{B}]}$ with a budget $\mathcal{B}$ for each cluster. If the number of nodes in a queue equals to the $\mathcal{B}$, the corresponding cluster will be update as follow: 
\begin{align}
\label{momentum}
\begin{split}
    \mathbf{C}_{s,h}\longleftarrow m\mathbf{C}_{s,h}+\frac{(1-m)}{\mathcal{B}}\sum\limits_{i=1}^{\mathcal{B}}\mathbf{Q}_{s,h,i},
\end{split}
\end{align}
where momentum coefficient m is set 0.999 in our experiments. Such a cross-iteration momentum update effectively facilitates the training process. And the updated clusters would in turn have a positive effect on the node update, which achieves a local-global mutual adaptation. We summarize the entire procedure of our OEPG in Algorithm.\ref{algo-oepg}.

After the self-supervised representation learning, the derived GNN model explicitly propagates the information between the local and global contexts, discriminating the representations. Benefiting from the explicit definition of the ego-semantic descriptors, such instance-adaptive global-aware features are also applicable (in Appendix.\ref{appendix-exp}) in downstream tasks under self-supervised, semi-supervised and transfer learning settings, substantially outperforming SOTA methods.
Notably, OEPG also generalizes to the quantity- and topology-imbalance scenarios, never explored by previous self-supervised graph representation learning methods. 
\begin{algorithm}[tb]
   \caption{Algorithm of OEPG}
   \label{algo-oepg}
\begin{algorithmic}
   \STATE {\bfseries Input:} Unlabeled graph dataset $\mathbf{G}$, the number of training steps $T$. 
   \STATE {\bfseries Output:} Pre-trained GNN model $\text{GNN}_{\theta_T}$.
   \STATE Initialize the model parameters $\theta_0$ with local pre-training.
   \STATE Initialize multi-granular clusters $\{\{\mathbf{C}_{s,h}\}_{s=1}^{S_h}\}_{h=1}^{H}$.
   \STATE Initialize node queue  $\{\mathbf{Q}_{s,h}\}$ for each $\mathbf{C}_{s,h}$.
   \FOR{$t=1$ {\bfseries to} $T$}
   \STATE Sample a mini-batch subgraphs $\{\mathcal{G}_i\}_{i=1}^N \in \mathbf{G}$.
  \FOR{$i=1$ {\bfseries to} $N$}
   \IF{$size(\{\mathbf{Q_{s,h}}\})= max\  budget\  \mathcal{B}$}
   \STATE Update $\mathbf{C}_{s,h}$ with Eq.\ref{argmax},\ref{momentum}, empty $\{\mathbf{Q_{s,h}}\}$.
   \ELSE
   \STATE Enqueue target node $V_i\in \mathcal{G}_i$ to $\{\mathbf{Q_{s,h}}\}$..
   \ENDIF
   \STATE (1).\ Obtain 1st, 2nd order ego-semantic descriptors $\{\{\mathbf{D}^{1st}_{s,h}\}_{s=1}^{S_h}\}_{h=1}^{H}$, $\{\{\mathbf{D}^{2nd}_{s,h}\}_{s=1}^{S_h}\}_{h=1}^{H}$ for target node according to Eq.\ref{ego-semantic1},\ref{ego-semantic2}.
   \STATE (2).\ Omni-granular norm on $\mathbf{D}^{1st}_{s,h}$, $\mathbf{D}^{2nd}_{s,h}$, and combine them to get $\mathbf{D}_{s,h}$ with Eq.\ref{omni-4},\ref{omni-5},\ref{omni-6}.
   \STATE (3).\ Integrate $\{\{\mathbf{D}_{s,h}\}_{s=1}^{S_h}\}_{h=1}^{H}$ with $\mathcal{G}_i$ to form new subgraph $\mathcal{G}^{new}_i$ according to Eq.\ref{eq-adj}. 
  \ENDFOR
  \STATE Update model parameters $\theta_t$ according to Eq.\ref{eq-contra}, \ref{eq-predic}.
   \ENDFOR
\end{algorithmic}
\end{algorithm}

\begin{table*}[ht]
\caption{Downstream test accuracy (\%) in self-supervised learning. The compared results are from the published papers.}
\label{exp-self}
\vskip 0.1in
\begin{center}
\begin{small}
\begin{sc}
\resizebox{0.99\textwidth}{!}{
\begin{tabular}{lccccccccr}
\toprule
Methods & NCI1 &PROTEINS& DD& MUTAG& COLLAB &RDT-B& RDT-M5K &IMDB-B\\
\midrule
node2vec \citep{grover2016node2vec}
&54.9$\pm$1.6& 57.5$\pm$3.6 &- &72.6$\pm$10.2& -& - &-& -\\
sub2vec \citep{adhikari2018sub2vec}  &52.8$\pm$1.5 &53.0$\pm$5.6 &-& 61.1$\pm$15.8& -& 71.5$\pm$0.4& 36.7$\pm$0.4& 55.3$\pm$1.5 \\
graph2vec \citep{narayanan2017graph2vec} & 73.2$\pm$1.8 &73.3$\pm$2.1& -& 83.2$\pm$9.3& -& 75.8$\pm$1.0& 47.9$\pm$0.3& 71.1$\pm$0.5\\
GAE \citep{kipf2016variational} & -& -& -& 87.7$\pm$0.7 &-& 87.1$\pm$0.1& 52.8$\pm$0.2& 70.7$\pm$0.7\\
MVGRL \citep{hassani2020contrastive}  & - &-& -& 75.4$\pm$7.8 &- &82.0$\pm$1.1& -& 63.6$\pm$4.2 \\
InfoGraph \citep{sun2019infograph}   & 76.2$\pm$1.1& 74.4$\pm$0.3& 72.9$\pm$1.8& 89.0$\pm$1.1& 70.7$\pm$1.1 &82.5$\pm$1.4& 53.5$\pm$1.0& 73.0$\pm$0.9\\
GraphCL  \citep{you2020graph}   &77.9$\pm$0.4 &74.4$\pm$0.5& 78.6$\pm$0.4& 86.8$\pm$1.3& 71.4$\pm$1.2& 89.5$\pm$0.8& 56.0$\pm$0.3& 71.1$\pm$0.4\\
JOAO  \citep{you2021graph}  & 78.4$\pm$0.5 &74.1$\pm$1.1 &77.4$\pm$1.2& 87.7$\pm$0.8 &69.3$\pm$0.3 &86.4$\pm$1.5& 56.0$\pm$0.3 &70.8$\pm$0.3\\
ADGCL  \citep{suresh2021adversarial}   & 69.7$\pm$0.5& 73.8$\pm$0.5& 75.1$\pm$0.4& 89.7$\pm$1.0& 73.3$\pm$0.6 &85.5$\pm$0.8& 54.9$\pm$0.4& 72.3$\pm$0.6\\
InfoGCL   \citep{xu2021infogcl}   & 80.2$\pm$0.6& -& - & 91.2$\pm$1.3& 80.0$\pm$1.3 &-& - &75.1$\pm$0.9\\
DGCL  \citep{li2021disentangled}  & 81.9$\pm$0.2& 76.4$\pm$0.5& - & 92.1$\pm$0.2& 81.2$\pm$0.3 &92.7$\pm$0.2& 56.1$\pm$0.2& 75.9$\pm$0.7\\
\midrule
\textbf{OEPG (ours)}  & \textbf{84.8}$\pm$0.4& \textbf{79.6}$\pm$0.7& \textbf{81.4}$\pm$0.9&\textbf{95.3}$\pm$0.6&\textbf{84.7}$\pm$0.7& \textbf{96.3}$\pm$0.9& \textbf{60.5}$\pm$0.3&\textbf{78.5}$\pm$0.6 \\
\bottomrule
\end{tabular}}
\end{sc}
\end{small}
\end{center}
\vskip -0.1in
\end{table*}

\begin{table*}[ht]
\caption{Downstream test ROC-AUC (\%) in transfer learning. The compared results are from the published papers.}
\label{exp-trans}
\vskip 0.1in
\begin{center}
\begin{small}
\begin{sc}
\resizebox{0.99\textwidth}{!}{
\begin{tabular}{lccccccccr}
\toprule
Methods & BBBP & Tox21 & ToxCast & SIDER &ClinTox &MUV&HIV&BACE \\
\midrule
EdgePred \citep{kipf2016variational}  & 67.3$\pm$2.4& 76.0$\pm$0.6& 64.1$\pm$0.6& 60.4$\pm$0.7& 64.1$\pm$3.7 & 74.1$\pm$2.1& 76.3$\pm$1.0& 79.9$\pm$0.9\\
InfoGraph \citep{sun2019infograph} &68.2$\pm$0.7& 75.5$\pm$0.6 &63.1$\pm$0.3 &59.4$\pm$1.0 &70.5$\pm$1.8 &75.6$\pm$1.2 &77.6$\pm$0.4 &78.9$\pm$1.1 \\
AttrMasking \citep{hu2020gpt}  &64.3$\pm$2.8 &76.7$\pm$0.4 &64.2$\pm$0.5 &61.0$\pm$0.7 &71.8$\pm$4.1 &74.7$\pm$1.4 &77.2$\pm$1.1 &79.3$\pm$1.6\\
ContextPred \citep{rong2020self}  &68.0$\pm$2.0 &75.7$\pm$0.7 &63.9$\pm$0.6 &60.9$\pm$0.6 &65.9$\pm$3.8 &75.8$\pm$1.7 &77.3$\pm$1.0 &79.6$\pm$1.2 \\
GraphPartition \citep{you2020does} &70.3$\pm$0.7& 75.2$\pm$0.4 &63.2$\pm$0.3 &61.0$\pm$0.8 &64.2$\pm$0.5 &75.4$\pm$1.7 &77.1$\pm$0.7& 79.6$\pm$1.8\\
GraphCL \citep{you2020graph} & 69.5$\pm$0.5 &75.4$\pm$0.9 &63.8$\pm$0.4 &60.8$\pm$0.7& 70.1$\pm$1.9 &74.5$\pm$1.3& 77.6$\pm$0.9 &78.2$\pm$1.2\\
JOAO \citep{you2021graph} &71.4$\pm$0.9 &74.3$\pm$0.6& 63.2$\pm$0.5 &60.5$\pm$0.7& 81.0$\pm$1.6& 73.7$\pm$1.0& 77.5$\pm$1.2 &75.5$\pm$1.3\\
GraphLoG \citep{xu2021self} & 72.5$\pm$0.8 &75.7$\pm$0.5 &63.5$\pm$0.7 &61.2$\pm$1.1 &76.7$\pm$3.3& 76.0$\pm$1.1& 77.8$\pm$0.8 &83.5$\pm$1.2\\
\midrule
\textbf{OEPG (ours)}  & \textbf{75.7}$\pm$0.6& \textbf{79.2}$\pm$0.7& \textbf{66.2}$\pm$0.4 & \textbf{64.1}$\pm$0.9& \textbf{84.5}$\pm$1.7& \textbf{81.6}$\pm$1.4& \textbf{81.3}$\pm$0.9& \textbf{85.2}$\pm$1.3\\
\bottomrule
\end{tabular}}
\end{sc}
\end{small}
\end{center}
\vskip -0.1in
\end{table*}

\begin{table*}[ht]
\caption{Downstream test accuracy (\%) in semi-supervised learning. The compared results are from the published papers.}
\label{exp-semi}
\vskip 0.1in
\begin{center}
\begin{small}
\begin{sc}
\resizebox{0.99\textwidth}{!}{
\begin{tabular}{lcccccr}
\toprule
Methods & WikiCS & Amazon Computers & Amazon Photos & Coauthor CS &Coauthor Physics \\
\midrule
DGI \citep{velivckovic2018deep}  & 75.4$\pm$0.1 &84.0$\pm$0.5& 91.6$\pm$0.2& 92.2$\pm$0.6& 94.5$\pm$0.5 \\
GMI  \citep{peng2020graph} &74.9$\pm$0.1& 82.2$\pm$0.3& 90.7$\pm$0.2& OOM& OOM\\
MVGRL \citep{hassani2020contrastive}  & 77.5$\pm$0.1 &87.5$\pm$0.1& 91.7$\pm$0.1& 92.1$\pm$0.1& 95.3$\pm$0.1\\
GBT \citep{bielak2021graph} & 77.3$\pm$0.6& 88.0$\pm$0.3& 92.2$\pm$0.4& 92.9$\pm$0.3& 95.2$\pm$0.1 \\
GRACE \citep{zhu2020deep}  & 80.1$\pm$0.5& 89.5$\pm$0.4& 92.8$\pm$0.5& 91.1$\pm$0.2& OOM \\
GCA \citep{zhu2021graph}  &78.4$\pm$0.1& 88.9$\pm$0.2& 92.5$\pm$0.2& 93.1$\pm$0.1&95.7$\pm$0.1 \\
CCA \citep{zhang2021canonical} &-&88.7$\pm$0.3& 93.1$\pm$0.1&93.3$\pm$0.2& 95.4$\pm$0.1\\
BGRL  \citep{thakoor2021bootstrapped}  &79.4$\pm$0.5& 89.7$\pm$0.3& 92.9$\pm$0.3& 93.2$\pm$0.1& 95.6$\pm$0.1\\
\midrule
\textbf{OEPG (ours)}  & \textbf{83.3}$\pm$0.3& \textbf{91.9}$\pm$0.5& \textbf{95.1}$\pm$0.4& \textbf{95.4}$\pm$0.1& \textbf{97.3}$\pm$0.1 \\
\bottomrule
\end{tabular}}
\end{sc}
\end{small}
\end{center}
\vskip -0.1in
\end{table*}

\begin{table}[ht]
\caption{Test Macro-F1 (\%) in semi-supervised learning. The imbalance ratio (I.R.) is set to different levels (5\%, 10\%) to test under different imbalance intensities. The supervised method is \citet{chen2021topology}).}
\label{exp-imb}
\vskip 0.1in
\begin{center}
\begin{small}
\begin{sc}
\setlength{\tabcolsep}{1.0mm}{
\begin{tabular}{l|ccccccccc}
\toprule
Datasets &
\multirow{2}{*}{\space}&
\multicolumn{2}{c}{Coauthor CS} &
\multicolumn{1}{c}{\space}& 
\multicolumn{2}{c}{Coauthor Physics}
\\ \midrule 
I.R. &&5\% &10\% &&5\% & 10\%\\
\midrule
Supervised&&83.9$\pm$2.1&81.3$\pm$3.2&&72.4$\pm$2.6&70.2$\pm$2.8 \\ 
GraphCL &&75.3$\pm$4.1&72.2$\pm$5.3&&68.2$\pm$4.6& 66.5$\pm$4.9\\
JOAO&&77.8$\pm$3.9&74.6$\pm$4.5&&67.4$\pm$4.1&66.0$\pm$4.8\\
DGCL&&80.4$\pm$3.2&78.1$\pm$3.5&&69.5$\pm$3.9&67.1$\pm$3.9\\

\textbf{OEPG}&&\textbf{85.1}$\pm$1.8&\textbf{83.4}$\pm$2.1&&\textbf{75.9}$\pm$2.2&\textbf{72.6}$\pm$2.5\\

\bottomrule
\end{tabular}}
\end{sc}
\end{small}
\end{center}
\vskip -0.1in
\end{table}

\begin{table}[ht]
\caption{Semi-supervised learning on large-scale OGB datasets on  (accuracy in \% on
ogbg-ppa, F1 score in \% on ogbg-code, ROC-AUC in \% on ogbg-molhiv). L.R. denotes the label ratio.}
\label{exp-scale}
\vskip 0.1in
\begin{center}
\begin{small}
\begin{sc}
\setlength{\tabcolsep}{1.5mm}{
\begin{tabular}{clcccc}
\toprule
L.R.&Methods & ppa & code&molhiv  \\
\midrule
\multirow{4}{*}{1\%}& GraphCL &40.8$\pm$1.3&7.6$\pm$0.3&67.6$\pm$1.6 \\&JOAO&47.2$\pm$1.3&6.8$\pm$0.3 &-\\&DGCL&-&-&69.0$\pm$1.7 \\&\textbf{OEPG}&\textbf{52.7}$\pm$1.2&\textbf{9.1}$\pm$0.4&\textbf{74.2}$\pm$1.6\\

\midrule
\multirow{4}{*}{10\%}& GraphCL &57.8$\pm$1.3&22.5$\pm$0.2&70.6$\pm$1.6  \\&JOAO&60.9$\pm$0.8&22.1$\pm$0.3&-\\&DGCL &-&-&73.6$\pm$1.5\\&\textbf{OEPG}&\textbf{64.4}$\pm$0.7&\textbf{24.8}$\pm$0.3&\textbf{77.5}$\pm$1.7\\
\bottomrule
\end{tabular}}
\end{sc}
\end{small}
\end{center}
\vskip -0.1in
\end{table}
\section{Experiments and Analysis}
\label{sec-exp-analysis}
\subsection{Experiment Setups}
\paragraph{Datasets of Different Scales and Domains.} To adequately validate the effectiveness of our OEPG, we use the mutiple downstream datasets cross the scales (small, medium and large scales) and domains (social, academic and biomedical graphs) including TUDataset \cite{morris2020tudataset}, WikiCS \cite{mernyei2020wiki}, Amazon Computers\&Amazon Photos \cite{mcauley2015image}, Coauthor CS\&Coauthor Physics \cite{sinha2015overview}, MoleculeNet \cite{wu2018moleculenet}, Citeseer, Cora, Pubmed \cite{sen2008collective}, Open Graph Benchmark (OGB) \cite{hu2020open}. More descriptions and statistics are in Appendix.\ref{appendix-data}.
\paragraph{Evaluation Protocols.} To adequately evaluate our algorithm, we perform downstream evaluations in three settings, following the same protocols as in State-of-the-Art. (1) In self-supervised learning \cite{xu2021self}, we pre-train the model using the whole graph dataset to learn a graph encoder and compute embedding for each node or graph, then the embeddings are feed into a downstream SVM \cite{you2021graph} or linear classifier \cite{velivckovic2018deep} for evaluation. (2) In transfer learning, we first pre-train the model on a larger graph dataset \cite{you2021graph,xu2021self}. Then we finetune and evaluate the model on smaller datasets of the same category using the given training/validation/test split. (3) In semi-supervised learning \cite{you2020graph}, for datasets without explicit train/validation/test split, we conduct pre-training process with all graph data at first. Then we finetune and evaluate the model with K folds \cite{you2021graph}. For datasets with the explicit split, we only pre-train the model with the training split, finetune on the partial training split and evaluate on the validation/test splits.
\paragraph{GNN Architectures.}We use the same
GNN architectures with default training hyper-parameters as in the SOTA methods under three experiment settings. In specific, (1)
in self-supervised representation learning, GIN \cite{xu2018powerful} is used with 3 layers and 32 hidden dimensions, (2) in semi-supervised learning, a 2-layer or 3-layer (for large graphs) GCN encoder network with 128 or 256 hidden dimensions, and
(3) in transfer learning and on large-scale OGB datasets,
GIN is used with 5 layers and 300 hidden dimensions.

\subsection{Experimental Comparisons with State-of-the-Art}
We conduct a thorough comparison with previous unsupervised/self-supervised methods, including: (1) traditional graph embedding such as node2vec \cite{grover2016node2vec}, sub2vec \cite{adhikari2018sub2vec},
graph2vec \cite{narayanan2017graph2vec} and GAE \cite{kipf2016variational}, (2) contrastive learning including MVGRL \cite{hassani2020contrastive}, GraphCL \cite{you2020graph}, JOAO \cite{you2021graph}, DGCL   \cite{li2021disentangled}, GCA \cite{zhu2021graph} and GRACE \cite{zhu2020deep}.
(3) information theory including InfoGraph \cite{sun2019infograph}, ADGCL \cite{suresh2021adversarial}, InfoGCL  \cite{xu2021infogcl}, DGI \cite{velivckovic2018deep}, CCA \cite{zhang2021canonical} and GMI \cite{peng2020graph},
(4) predictive training paradigm including BGRL  \cite{thakoor2021bootstrapped}, EdgePred  \cite{kipf2016variational}, AttrMasking \cite{hu2020gpt} and ContextPred \cite{rong2020self},
(5) others such as GBT \cite{bielak2021graph}, GraphPartition \cite{you2020does} and GraphLoG  \cite{xu2021self}. \textbf{See Appendix.\ref{appendix-exp} for more details.} 
\paragraph{Self-Supervised Learning.} We follow the previous methods \cite{you2021graph} to perform pre-training and evaluation, and report the results in Table.\ref{exp-self}. Our OEPG substantially outperforms all previous methods with \textbf{a 3\%$\sim$4\% accuracy gain} compared to SOTA. It demonstrates that OEPG exploits the local proximity and global diversity, improving the expressiveness and discriminativity of repersentations.
\paragraph{Transfer Learning.} For fair comparison, we use the same subset of ZINC15 \cite{sterling2015zinc} for pre-training as \cite{xu2021self}, and perform evaluation on smaller datasets with the results in Table.\ref{exp-trans}. Among all methods, our OEPG achieves the best performance on all datasets, and \textbf{a 2\%$\sim$6\% performance gain} is obtained in terms of ROC-AUC compared to SOTA method.
\paragraph{Semi-Supervised Learning.} We report our comparison results in Table.\ref{exp-semi} with the same evaluation setting as \cite{bielak2021graph}. Our OEPG surpasses the SOTA by a large margin, with \textbf{a 2\%$\sim$4\% accuracy improvement}. The proposed model could strengthen the label propagation with the effective local-global semantic feature propagation.
\paragraph{Generalizing to Quantity- and Topology-Imbalance Scenarios.}
Quantity-imbalance denotes the unequal quantity of labeled examples while topology-imbalance \cite{chen2021topology} means the asymmetric topological properties of the labeled nodes. They are both practical issues and hard to address. Notably, we find that our OEPG also generalizes well in tasks where quantity- and topology-imbalance problems co-occur as reported in Table.\ref{exp-imb}, outperforming previous methods \textbf{by 2\%$\sim$7\% including supervised model}. The main reason is OEPG enables each node to adequately embed the global semantic features to enrich and discriminate itself despite sparse samples as illustrated in Appendix.\ref{app-prove}.
\paragraph{Scaling to Large-Scale Graph Datasets.} We evaluate our OEPG on large-scale OGB datasets, with results in Table.\ref{exp-scale}. Note that our OEPG significantly outperforms the previous SOTA methods with \textbf{2\%$\sim$5\% performance gains} at 1\% and 10\% label
ratios, demonstrates the scalability of OEPG.


\subsection{Model Analysis}
\paragraph{Effectiveness of Ego-Semantic Descriptors.} We investigate the effect of our proposed ego-semantic descriptors on downstream tasks and we reproduce GraphCL as our baseline. We report the result on the MUV dataset in Figure.\ref{pic-ablation}. The performance is obviously promoted by 2\%-4\% when adding the ego-semantic descriptors, which demonstrates the sufficient utilization of global information in OEPG. Besides, we also conclude that second-order ego-semantic is more effective than first-order one, and the concatenation of them performs the best with the diverse global perspective.  

\paragraph{The Impact of Omni-Granular Normalization.} In Subsection.\ref{sec-omni}, we have theoretically proved the value of parameters $\alpha,\beta$ can reflect the global diversity of graph dataset. We also provide the experimental proof in Figure.\ref{pic-ablation-omni} of Appendix.\ref{appendix-analysis}. We gradually increase the diversity of graph datasets, and find the values of $\alpha,\beta$ become correspondingly larger. The phenomenon implies the omni-granular normalization effectively reflects and leverages the global diversity. We also find the second-order ego-descriptors (related with $\beta$) are more sensitive to the global diversity.
\begin{figure}[ht]
\vskip 0.1in
\begin{center}
\centerline{\includegraphics[width=8.5cm]{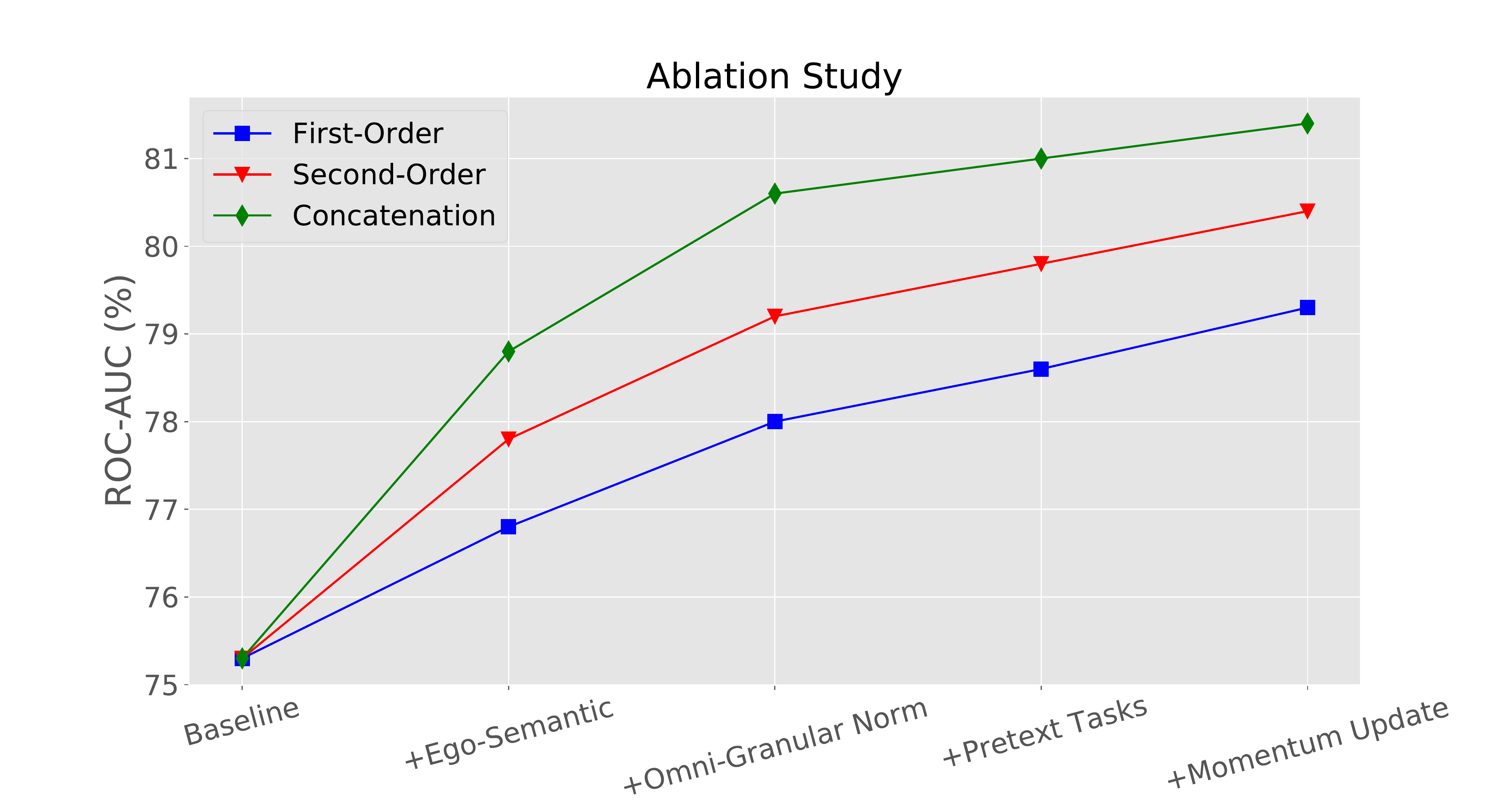}}
\caption{Ablation study on MUV dataset.}
\label{pic-ablation}
\end{center}
\vskip -0.2in
\end{figure}
\paragraph{Ablation Study of OEPG.} Illustrated in Section.\ref{sec-oepg}, OEPG consists of four cascade modules. The ablation study is conducted by gradually adding each module in order, and the results are in Figure.\ref{pic-ablation}.
We conclude that each module gives OEPG a accuracy gain, demonstrates the efficacy of the designs in our framework. The x-axis in is order invariant since the modules are conducted in sequence. Note that the ego-semantic descriptors provide the most performance gain among all modules, further enhanced when empowered by omni-granular normalization. And specified pretext tasks and cross-iteration momentum update are also vital for the integrity and effectiveness of our framework. The results on more datasets are illustrated in Appendix.\ref{response-1.2}. More analysis about the sensitivity of the hyperparameters are in Appendix.\ref{appendix-analysis}. 

\section{Conclusions}
In this paper, we propose a unified framework to explicitly characterize the global-aware feature and enable the local-global semantic propagation, namely Omni-Granular Ego-Semantic Propagation for self-supervised Graph representation learning (OEPG). We first propose the fused first- and second-order ego-semantic descriptors, weighted by omni-granular normalization, to explicitly represent the instance-adaptive global semantic information. New local-global pretext tasks and a cross-iteration omni-granular momentum update are further developed to improve the local-global mutual adaptation. Extensive experiments on datasets cross scales and domains are conducted and our OEPG substantially outperforms all the SOTA methods, demonstrating the effectiveness. Notably, OEPG also achieves the best performance in the quantity- and topology-imbalance scenarios. For the future work, we will further improving the scalability and transferability of our framework.

\section*{Acknowledgement}
This work was supported by the National Natural Science Foundation of China (No.62102008).

\bibliography{example_paper}
\bibliographystyle{icml2022}

\newpage
\appendix
\onecolumn
\section{Dataset Descriptions and Statistics}
\label{appendix-data}
In this section, we provide the detailed descriptions and statistics about all the datasets used in Section.\ref{sec-exp-analysis}.

\paragraph{TUDataset} \cite{morris2020tudataset} contains the datasets of different natures with graph data for small molecules \& proteins, computer vision and various relation networks.
\begin{table*}[ht]
\caption{Statistics for datasets of diverse nature from the benchmark TUDataset.}
\vskip 0.15in
\begin{center}
\begin{small}
\begin{sc}
\setlength{\tabcolsep}{1.5mm}{
\begin{tabular}{c|c|c|cc}
\toprule
Dataset& Graph Count& Avg. Node& Avg. Degree\\\midrule
NCI1& 4,110 &29.87& 1.08\\
PROTEINS& 1,113& 39.06& 1.86\\
DD& 1,178& 284.32 &715.66\\
MUTAG &188 &17.93& 19.79\\
COLLAB &5,000 &74.49& 32.99\\
RDT-B& 2,000& 429.63 &1.15\\
RDB-M &2,000 &429.63 &497.75\\
GITHUB& 4,999& 508.52& 594.87\\
IMDB-B& 1,000 &19.77 &96.53\\
\bottomrule
\end{tabular}}
\end{sc}
\end{small}
\end{center}
\vskip -0.1in
\end{table*}
\paragraph{WikiCS}  \cite{mernyei2020wiki} is a network of Computer Science related Wikipedia articles with edges
representing references between those articles. \textbf{Amazon Computers, Amazon Photos} \cite{mcauley2015image} are two networks extracted from Amazon’s copurchase data. Nodes are products and edges denote that these products were often bought together. \textbf{Coauthor CS, Coauthor Physics} are two networks extracted from the Microsoft Academic
Graph \cite{sinha2015overview}. Nodes are authors and edges denote a collaboration of two authors.
\begin{table*}[ht]
\caption{Statistics for datasets of WikiCS, Amazon Computers, Amazon Photos, Coauthor CS and Coauthor Physics.}
\vskip 0.15in
\begin{center}
\begin{small}
\begin{sc}
\setlength{\tabcolsep}{1.5mm}{
\begin{tabular}{c|c|c|c|c}
\toprule
Dataset& Nodes&  Edges&  Features & Classes\\\midrule
WikiCS&  11,701 & 216,123 & 300 & 10\\
Amazon  Computers & 13,752&  245,861 & 767&  10\\
Amazon Photos&  7,650&  119,081&  745&  8\\
Coauthor CS&  18,333&  81,894 & 6,805 & 15\\
Coauthor Physics & 34,493&  247,962&  8,415 & 5\\
\bottomrule
\end{tabular}}
\end{sc}
\end{small}
\end{center}
\vskip -0.1in
\end{table*}
\paragraph{Citeseer, Cora, Pubmed} \cite{sen2008collective} are widely-used citation networks where documents (nodes)
are connected through citations (edges).
\begin{table*}[ht]
\caption{Statistics for datasets of Citeseer, Cora and Pubmed.}
\vskip 0.15in
\begin{center}
\begin{small}
\begin{sc}
\setlength{\tabcolsep}{1.5mm}{
\begin{tabular}{c|c|c|c|c}
\toprule
Dataset  &Nodes&Edges &Features &Classes\\\midrule
Cora & 2,708 &5,429& 1,433& 7\\
Citeseer &3,327 &4,732 &3,703 &6\\
Pubmed &19,717 &44,338 &500& 3\\
\bottomrule
\end{tabular}}
\end{sc}
\end{small}
\end{center}
\vskip -0.1in
\end{table*}
\paragraph{MoleculeNet} \cite{wu2018moleculenet} has a full
collection currently includes over 700 000 compounds tested on a range of different properties. These properties can be subdivided into four categories: quantum mechanics, physical
chemistry, biophysics and physiology.
\begin{table*}[ht]
\caption{Statistics for datasets of diverse nature from the benchmark MoleculeNet.}
\vskip 0.15in
\begin{center}
\begin{small}
\begin{sc}
\setlength{\tabcolsep}{1.5mm}{
\begin{tabular}{c|c|c|cc}
\toprule
Dataset &Graph Count& Avg. Node& Avg. Degree\\\midrule
BBBP &2,039 &24.06 &51.90\\
Tox21& 7,831& 18.57& 38.58\\
ToxCast& 8,576 &18.78 &38.52\\
SIDER &1,427& 33.64& 70.71\\
ClinTox& 1,477 &26.15 &55.76\\
MUV &93,087& 24.23 &52.55\\
HIV &41,127 &25.51& 54.93\\
BACE &1,513 &34.08 &73.71\\

\bottomrule
\end{tabular}}
\end{sc}
\end{small}
\end{center}
\vskip -0.1in
\end{table*}
\paragraph{Open Graph Benchmark (OGB)} \cite{hu2020open} contains a diverse set of large-scale
and realistic benchmark datasets to validate the scalablity and robustness of graph machine learning algorithms.
\begin{table*}[ht]
\caption{Statistics for large-scale datasets of diverse nature from the OGB benchmark.}
\vskip 0.1in
\begin{center}
\begin{small}
\begin{sc}
\setlength{\tabcolsep}{1.5mm}{
\begin{tabular}{c|c|c|c}
\toprule
Dataset& Graph Count& Avg. Node& Avg. Degree\\\midrule
ogbg-ppa &158,110 &243.4 &2,266.1\\
ogbg-code& 452,741& 125.2& 124.2\\
ogbg-molhiv&41127&25.5&27.5\\
\bottomrule
\end{tabular}}
\end{sc}
\end{small}
\end{center}
\vskip -0.1in
\end{table*}
\section{More Experimental Details, Results and Analysis}
\label{appendix-exp}
\subsection{More Experimental Details}
\paragraph{Pre-Training Details.}
In the pre-training process, we use an Adam optimizer \cite{kingma2014adam} (learning rate: $1\times 10^{-3}$
) to pre-train the OEPG model with the typical local graph contrastive learning \cite{you2020graph} for several epochs and then train the whole model with the specified pretext tasks, optimized by the contrastive loss $\mathcal{L}_{contrastive}$  in Eq.\ref{eq-contra} and predictive loss $\mathcal{L}_{predictive}$ in Eq.\ref{eq-predic}. The total epochs are same or comparable with previous methods.
At the start of every two epochs, we would perform K-means clustering for the initialization of hierarchical clusters. The parameter budget $\mathcal{B}$ is set to 4 and we adopt the 4-hierarchies ($H=4$) ego-semantic descriptors for each subgraph/graph, with the specified cluster numbers $[S_1,S_2,S_3,S_4]=[16, 12, 8, 4]$. 
\paragraph{Fine-tuning Details.} For fine-tuning on a certain downstream task, we follow the default setting in the previous methods \cite{xu2021self,you2021graph,bielak2021graph}. There are some differences between self-supervised/semi-supervised learning and transfer learning when applying our OEPG. In self-supervised/semi-supervised learning tasks, the classes between the pre-training and fine-tuning are the same, so the hierarchical clusters can be straightly used in computing the ego-semantic descriptors for downstream classification tasks. In contrary,  the hierarchical clusters will be recomputed for obtaining the ego-semantic descriptors in transfer learning downstream tasks since the classes have changed from the pre-training to the fine-tuning.
\subsection{More Experimental Results}
\begin{table*}[ht]
\caption{Downstream test accuracy (\%) on classification tasks.}
\label{app-res}
\vskip 0.15in
\begin{center}
\begin{small}
\begin{sc}
\begin{tabular}{lcccccr}
\toprule
Methods & CORA & CITESEER & PUBMED  \\
\midrule
DGI    & 83.8$\pm$0.5& 72.0$\pm$0.6& 77.9$\pm$0.3\\
MVGRL  & 86.8$\pm$0.5& 73.3$\pm$0.5& 80.1$\pm$0.7\\
GRACE    & 83.0$\pm$0.8& 71.6$\pm$0.6& 86.0$\pm$0.2\\
CCA& 84.2$\pm$0.4&73.1$\pm$0.3&81.6$\pm$0.4\\
BGRL   &83.8$\pm$1.6& 72.3$\pm$0.9& 86.0$\pm$0.3\\
\midrule
\textbf{OEPG (ours)}  & \textbf{88.3}$\pm$0.5& \textbf{75.5}$\pm$0.8& \textbf{88.7}$\pm$0.4\\
\bottomrule
\end{tabular}
\end{sc}
\end{small}
\end{center}
\vskip -0.1in
\end{table*}
\paragraph{More Comparison Results.}
Due to the page limit, we have one experiment comparison results left as in Table.\ref{app-res}. As in the main text, our OEPG also achieves the best performance in all methods with a large accuracy promotion.

\subsection{More Analysis of OEPG}
\label{appendix-analysis}

\paragraph{The Discriminativity of Ego-Semantic Descriptors.}
\label{app-prove}
In this paper, we mainly propose the ego-semantic descriptors in the graph feature propagation process to discriminating the final representations. Next, we will illustrate how it works in specific and we take graph-level pre-training for example. Given the full graph dataset: 
\begin{align}
\label{ego-prove}
    \{\underbrace{\mathcal{G}_1, \mathcal{G}_2,\cdots, \mathcal{G}_g}_{\mathbf{C}_1}, \underbrace{\mathcal{G}_{g+1}, \cdots, \mathcal{G}_{2\cdot g}}_{\mathbf{C}_2},\cdots,\underbrace{\mathcal{G}_{(n-1)\cdot g+1} \cdots, \mathcal{G}_{n\cdot g}}_{\mathbf{C}_n}\},
\end{align}
where we assume each graph cluster $\mathbf{C}_n$ contains a group of same number of graphs and only consider 1-hierarchy clusters for simplicity. When we judge whether a pair of graphs $\mathcal{G}_i, \mathcal{G}_j$ belong to the same class or not, we typically compute the similarity between the graph embeddings $f_{\mathcal{G}_i}, f_{\mathcal{G}_j}$ as follow:
\begin{align}
    sim(f_{\mathcal{G}_i},\ f_{\mathcal{G}_j})=sim(\mathcal{F}(\mathcal{G}_i),\  \mathcal{F}(\mathcal{G}_j)), 
\end{align}
where $\mathcal{F}$ denotes all the linear and non-linear operations and $sim(\cdot,\cdot)$ is the similarity function. Equipped with ego-semantic descriptors, the process can be reformulated as follow:
\begin{align}
\begin{split}
    sim(f_{\mathcal{G}_i},\ f_{\mathcal{G}_j})=sim(&\mathcal{F}(\mathcal{G}_i,\underbrace{\{\mathcal{F}(\mathcal{G}_i)-\mathbf{C}_k\}^n_{k=1}}_{1st-order\ ego-semantic}),\quad \mathcal{F}(\mathcal{G}_j,\underbrace{\{\mathcal{F}(\mathcal{G}_j)-\mathbf{C}_k\}^n_{k=1}}_{1st-order\ ego-semantic})), \\
    sim(f_{\mathcal{G}_i},\ f_{\mathcal{G}_j})=sim(&\mathcal{F}(\mathcal{G}_i,\underbrace{\{(\mathcal{F}(\mathcal{G}_i)-\mathbf{C}_l)\bullet \{(\mathcal{F}(\mathcal{G}_i)-\mathbf{C}_k)\}_{k=1}^{n}\}^n_{l=1}}_{2nd-order\ ego-semantic}),\quad\\ &\mathcal{F}(\mathcal{G}_j,\underbrace{\{(\mathcal{F}(\mathcal{G}_j)-\mathbf{C}_l)\bullet \{(\mathcal{F}(\mathcal{G}_j)-\mathbf{C}_k)\}_{k=1}^{n}\}^n_{l=1}}_{2nd-order\ ego-semantic})), 
\end{split}
\end{align}
since clusters $\{\mathbf{C}_k\}_{k=1}^n$ are all non-parametric, and thus ego-semantic descriptors only vary with $\mathcal{G}_i, \mathcal{G}_j$, which are graph-adaptive. When $\mathcal{G}_i, \mathcal{G}_j$ are of different classes, all their ego-semantic descriptors can be different, enlarging the distances between $f_{\mathcal{G}_i}$ and $f_{\mathcal{G}_j}$.  Conversely, if $\mathcal{G}_i, \mathcal{G}_j$ are of the same class, all their ego-semantic descriptors can be similar, minimizing the distances between $f_{\mathcal{G}_i}$ and $f_{\mathcal{G}_j}$. The proposed ego-semantic descriptors not only discriminate the representations but also enhance the expressiveness by explicitly embedding the instance-adaptive global semantic features.
\begin{figure}[ht]
\vskip 0.2in
\begin{center}
\centerline{\includegraphics[width=9cm]{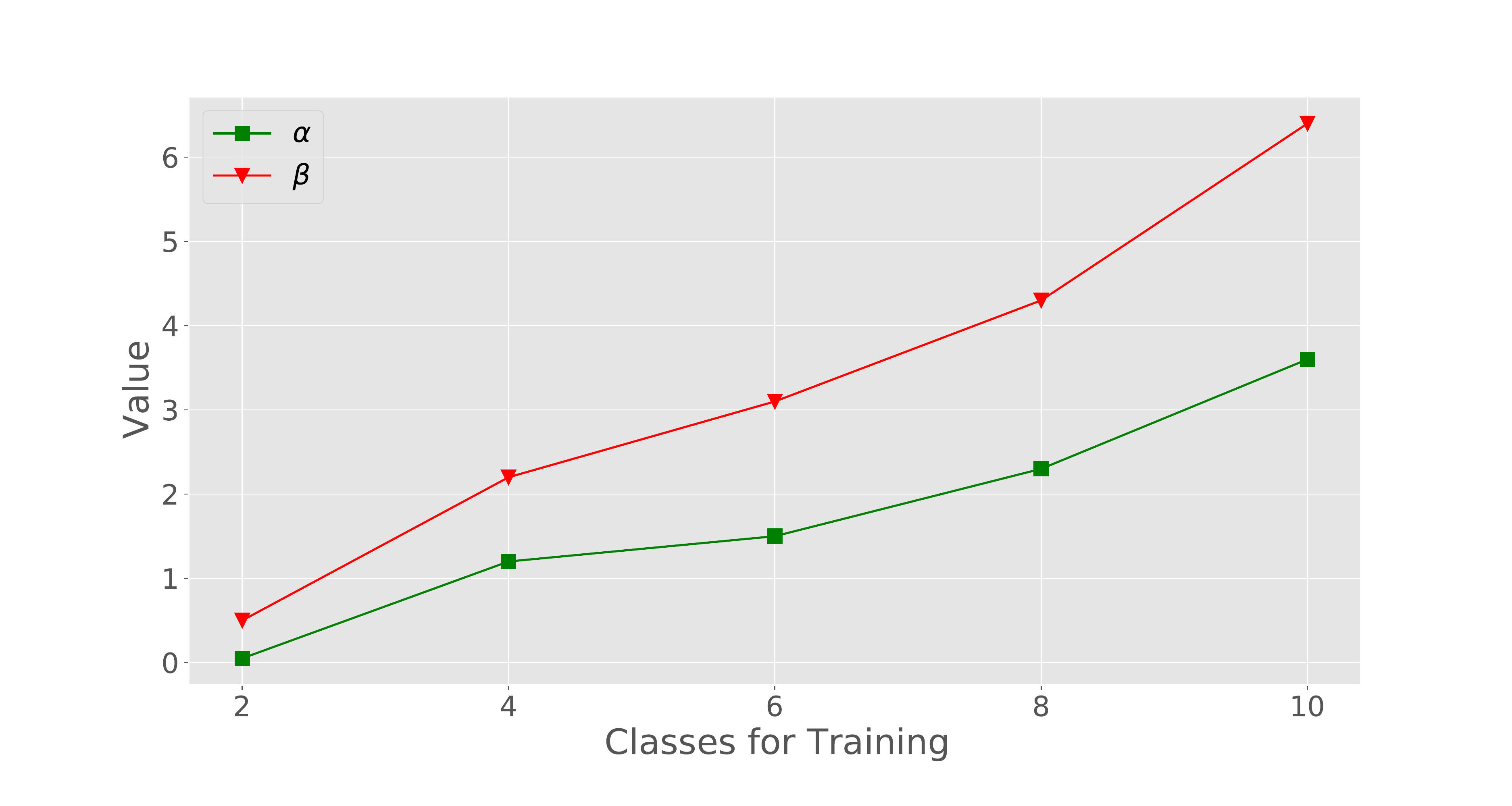}}
\caption{Experimental analysis involving omni-granular normalization.}
\label{pic-ablation-omni}
\end{center}
\vskip -0.2in
\end{figure}
\paragraph{Experimental Analysis Involving Omni-Granular Normalization.} We make the controllable experiments to validate the effectiveness of omni-granular normalization. We use the 10-class WikiCS dataset for illustration. Specifically, we pre-train the OEPG on the graph dataset with the different number of total classes (from 2 to 10 classes). And we print the learned value of $\alpha,\beta$ and report them in Figure.\ref{pic-ablation-omni}. We can find that with the increasing classes (global diversity), the values also become larger and the value of $\beta$ (related with second-order ego-semantic descriptors) is more sensitive to the classes increasing. 

\begin{table}[ht]
\caption{Ablation study on more datasets (accuracy in \%.).}
\vskip 0.03in
\label{exp-ablation}
\centering
\setlength{\tabcolsep}{1.7mm}{
\begin{tabular}{l|cccccc}
\toprule
Modules& RDT-B& WikiCS \\
\midrule
 Baseline& 91.6&78.1\\
 +Ego-Semantic&93.8 (+2.2)& 80.4 (+2.3)\\
 +Omni-Granular Norm&95.0 (+1.2)&82.0 (+1.6)\\

 +Pretext Tasks &95.7 (+0.7)& 82.8 (+0.8)\\
 +Momentum Update &96.3 (+0.6)&83.3 (+0.5)\\
\bottomrule
\end{tabular}}
\end{table}
\paragraph{More Ablation Results.}
\label{response-1.2}
Here we provide more ablation results in Table.\ref{exp-ablation}. 
The results including Fig.\ref{pic-ablation} consistently illustrate that the \textbf{Ego-Semantic is most relevant contribution to accuracy gains}, because it is the Ego-Semantic which explicitly introduces the additional global information to the representations and other modules are based on it to make optimizations. 
\begin{figure}[ht]
\vskip 0.2in
\begin{center}
\centerline{\includegraphics[width=10cm]{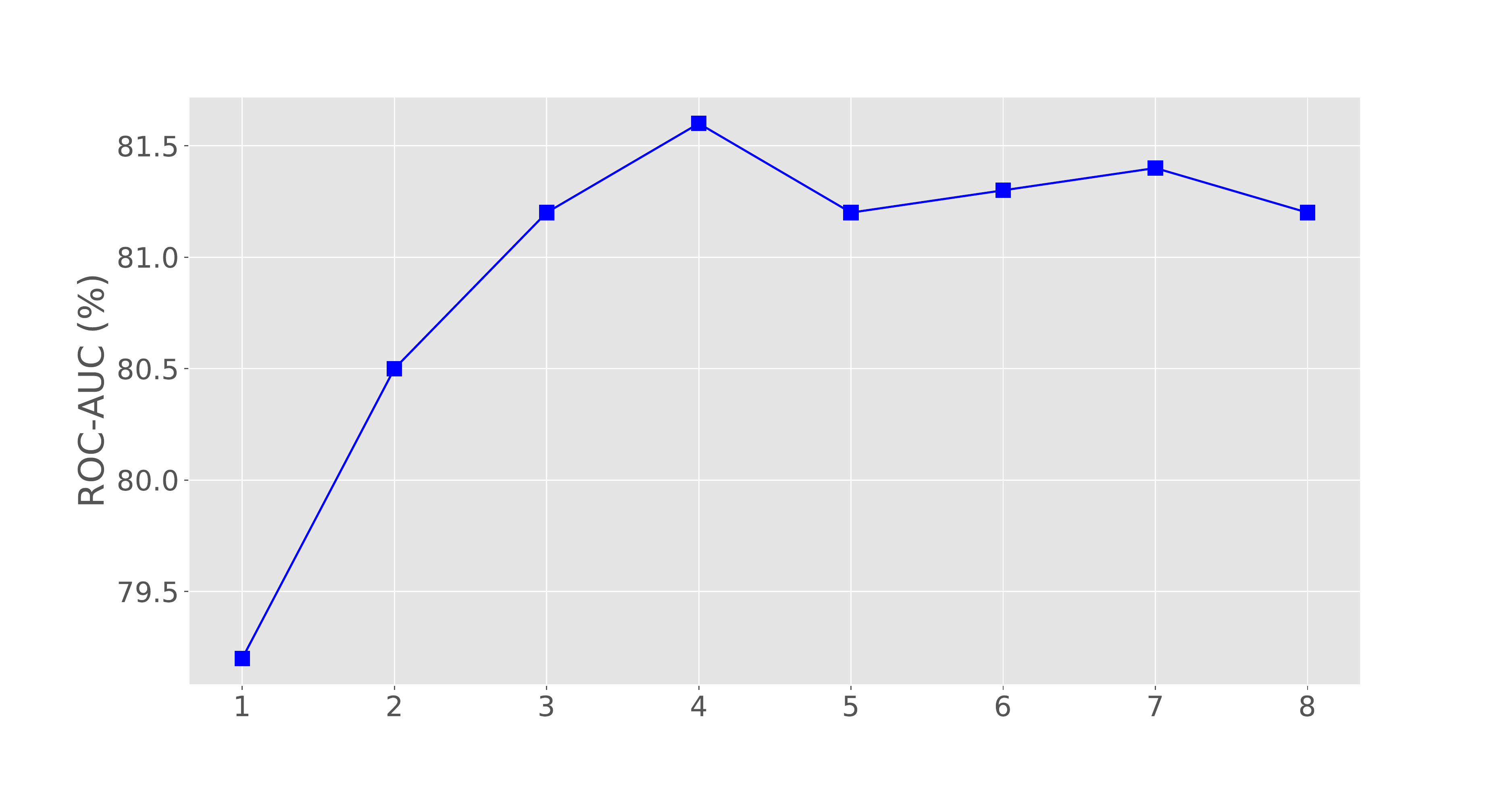}}
\caption{Ablation study of the hierarchies of the ego-semantic descriptors.}
\label{pic-ablation-hyp}
\end{center}
\vskip -0.2in
\end{figure}
\paragraph{Sensitivity of Hierarchy $h$ in Ego-Semantic.} In this part, we discuss the selection of parameter $h$ in our OEPG and report the results on MUV dataset for example. From the Figure.\ref{pic-ablation-hyp}, we can see that the performance is gradually promoted with the higher $h$, which implies that our omni-granular ego-semantic does enable the model to capture the global diversity and discriminate the representations. And the performance has a slight degradation when $h$ is higher than 4. We think it may be caused by the large redundancy of the ego-semantics.


\end{document}